\definecolor{darkblue}{RGB}{0,0,160}
\newcommand{\RR}{\mathbb{R}}
\newcommand{\PP}{\mathbb{P}}
\newcommand\macaulay{\texttt{Macaulay2 }}
\DeclareMathOperator{\conv}{conv}
\DeclareMathOperator{\diag}{diag} 
\newtheorem{thm}{Theorem}[section]
\newtheorem{lem}[thm]{Lemma}
\newtheorem{prop}[thm]{Proposition}
\theoremstyle{definition}
\newtheorem{exmp}[thm]{Example}
\newtheorem{rem}[thm]{Remark}
\definecolor{codegreen}{rgb}{0,0.6,0}
\definecolor{codepurple}{rgb}{0.58,0,0.82}
\definecolor{codered}{RGB}{185,19,6}
 \lstdefinelanguage{myLang}{
   basicstyle=\footnotesize\ttfamily,
   xleftmargin=2em,
   xrightmargin=2em,
   columns=fullflexible,
   keepspaces=true, 
   classoffset=2,
   morekeywords={minors, genericMatrix, matrix, ideal, eliminate, map, degrees, random, numgens, minors, binomial, entries, transpose, submatrix, det},
   keywordstyle={\color{blue}\bfseries},
   classoffset=3,
   morekeywords={for,from, to, do, while},
   keywordstyle={\color{codepurple}\bfseries},
   classoffset=4,
   morekeywords={Height},
   keywordstyle={\color{Emerald}\bfseries},
   classoffset=5,
   morekeywords={QQ, ZZ},
   keywordstyle={\color{codegreen}\bfseries},
   sensitive=false, 
   morecomment=[l]{--}, 
   commentstyle=\color{codered},
   stepnumber=1,
   numbers=left,
   captionpos=b,
   showspaces=false,
   showstringspaces=false,
   morestring=[b]",
   frame=single
}
\begin{document}

\title{Pictures of Combinatorial Cubes}
\date{July 2017}
\author{Andr\'e Wagner}
\address{Technische Universität Berlin\\ Berlin, Germany}
\urladdr{\url{http://page.math.tu-berlin.de/~wagner}}
\label{Chp:critical} 

\begin{abstract}
We prove that the 8-point algorithm always fails to reconstruct a unique fundamental matrix $F$
independent on the camera positions, when its input are image point configurations that are 
perspective projections of  the vertices of a combinatorial cube in $\RR^3$. We give an algorithm that improves  the
7- and 8-point algorithm in such a pathological situation. Additionally we analyze the regions of focal point positions
where a reconstruction of $F$  is possible at all,  when the world points are the vertices of a combinatorial
cube in $\RR^3$.
\end{abstract}
\maketitle
\section{Introduction}

The \emph{8-point algorithm} \cite[Algorithm 11.1]{hartley2003multiple} is one of the key
algorithms in epipolar geometry. It is successfully used in a vast number of applications
to compute the \emph{fundamental matrix} $F\in\RR^{3\times 3}$ between two views.  The 8-point algorithm purely relies
on methods from linear algebra and  is extremely fast. At the same time it only gives
slightly inferior results compared to more involved algorithms that rely on nonlinear
optimization.

An algorithm to reconstruct the fundamental matrix is \emph{defeated by a world point
configuration} in $(\PP^3)^n$, if this algorithm fails to produce a unique fundamental
matrix from the projections in $(\PP^2)^n$ of that world point configuration independent
on the choice of cameras.  The 8-point algorithm is \emph{defeated} by the eight vertices
of a unit cube. We extend this result to arbitrary convex 3-polytopes bounded by six
quadrilateral faces, whose vertex-facet incidence is the same as that of a cube. Such a polytope in $\RR^3$
is called   \emph{combinatorial cube}. The
vertex-facet incidence of a polytope is the undirected bipartite graph formed by the
containment of vertices within the facets \cite[\S 3.5]{joswig2013polyhedral}.

\begin{figure} \centering
\begin{subfigure}[b]{.4\textwidth} \centering
  \includegraphics[width=.7\linewidth]{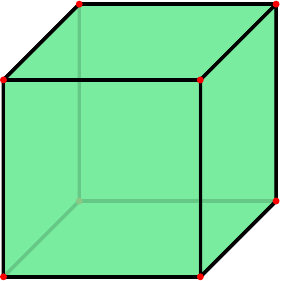}
  \caption{a standard cube}
  \label{fig:Cube1}
\end{subfigure}
\begin{subfigure}[b]{.4\textwidth} \centering
  \includegraphics[width=.7\linewidth]{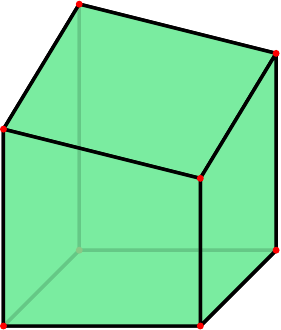}
  \caption{a combinatorial cube}
  \label{fig:Cube2}
\end{subfigure}
\caption{Two projective non equivalent combinatorial cubes}
\label{fig:CubesAndGraph}
\end{figure}

In the last section of this paper we suggest how to handle the situation when the 8-point
algorithm is defeated by the vertices of a combinatorial cube by using a modified version of the
7-point algorithm in this case.

There are multiple papers in multiview geometry which are concerned with critical
configurations \cite{maybank2012theory,hartley2007critical}.  Critical configurations in
two-view geometry are point configurations consisting of $n$ world points together with
the two focal points $f_i\in\PP^3$ of the cameras. For these configurations there exist
two ambiguous fundamental matrices. They are a feature of the geometry itself and not of
the choice of algorithm used to reconstruct the fundamental matrix $F\in\RR^{3 \times
3}$. Hartley and Kahl \cite{hartley2007critical} give the complete description of critical
configurations in multiview geometry. In the two-view case a necessary condition for a
focal-world point configuration to be critical, is that the $n$ world points $P_i\in
\PP^3$ and the two camera centers $f_i$ lie on a \emph{ruled quadric}
\cite{krames1941ermittlung}. Thus it is not possible to reconstruct a unique fundamental
matrix, independent of the chosen algorithm. Here we consider world
point configurations where the 8-point algorithm always fails to reconstruct the
fundamental matrix independent of the camera centers. These configurations are related to
critical configurations but they do not align. There are only a few results about
these configurations and usually rely on dimensional degeneracies, e.g. too many points on
a plane or on a line. In this case the quadric running through the $n$ points and the two
cameras centers is degenerate. For example in \cite{philip1998critical} it is shown, that
all points but one on a plane defeat the 8-point algorithm. If we add the two focal
points to this configuration then the point off the plane and the focal points span a
plane. Hence we can fit a ruled quadric of two intersecting planes through the $n+2$
points.

By $P_i\in\PP^k$ we denote the $i$-th point in a point configuration $P=(P_1,\ldots,P_n)$ of $n$
points in $\PP^k$.

Pinhole cameras are represented as  $3\times 4$ matrices $A_j$ with real entries. If two
image points $X_i, Y_i$ in two different pictures are perspective projections of the same
world point $P_i$ they  must satisfy the perspective relation $A_1P_i=\lambda_1X_i, \,
A_2P_i=\lambda_2Y_i$ with $\lambda_j\in \RR\setminus\{0\}$. As these equations must be
satisfied at the same time one can deduce a bilinear relation which both points must satisfy. 
\[
Y_i^TFX_i=0,
\]
where $F$ is the \emph{fundamental matrix}. To reconstruct the fundamental matrix from the
image point configurations $X,Y\in (\PP^2)^n$ different algorithms are available \cite[\S
11]{hartley2003multiple}. The most commonly used algorithm to reconstruct the fundamental
matrix is the 8-point algorithm \cite[Algorithm 11.1]{hartley2003multiple}.  However
standard implementations of the 8-point algorithm assume that the image point
configurations are suitably generic, such that the fundamental matrix $F$ can be
determined as the solution of a system of linear equations. For exact data the 8-point
algorithm then is as follows.  Via vectorizing $F\in\RR^{3\times 3}$ denoted by $\vv
F\in\RR^9$ it can be computed as the kernel of the matrix
\[
 Z=\begin{bmatrix}X_1\otimes Y_1\\ \vdots\\ X_n\otimes Y_n\end{bmatrix}\in \RR^{n\times 9},
\]
where $Z$ is the row wise tensor product of the image point configurations $X,Y\in(\PP^2)^n$. The
kernel of $Z$ is one-dimensional and $\vv F=\ker(Z)$. This yields the fundamental matrix $F\in
\RR^{3 \times 3}$. One necessary condition for the 8-point algorithm to successfully compute a
fundamental matrix is
\begin{equation} \label{dimKer} \dim(\ker(Z))=1.
\end{equation} 
Algorithm \ref{algo:8-point} states the 8-point algorithm in the absence of noise.

\begin{figure}
\begin{algorithm}[H]\label{algo:8-point}
 \KwIn{Two image point configurations $X,Y\in(\PP^2)^n$;}
 \KwOut{The fundamental matrix $F$;}
\Begin{
\begin{enumerate}
\item Compute $Z\in \RR^{8\times 9} $ from $X,Y$;
\item Compute the kernel $\vv F$ of $Z$;
\item The nine coordinates of $\vv F$ form the fundamental matrix $F$;
\end{enumerate}
}
 \caption{8-point algorithm (without noise)}
\end{algorithm}
\end{figure}

However, even if condition \ref{dimKer} is violated and the 8-point algorithm
fails to construct a fundamental matrix, it is sometimes possible to retrieve a unique
fundamental matrix using a different algorithm by additionally enforcing the rank two
constraint of the fundamental matrix $F$, namely if there exists
a unique rank two matrix in the kernel of $Z$.

\begin{exmp}
Let $P=\conv(\pm e_1\pm e_2 \pm e_3)$ be a standard cube and 
\[ \begin{array}{lr}
A_1=\begin{bmatrix}
     1 &    0  &   0  &   2\\
     0  &   1  &   0  &   3\\
     0  &   0  &   1  &   2
\end{bmatrix},& A_2=\begin{bmatrix}
     1 &    0  &   0  &   2\\
     0  &   1  &   0  &   3\\
     0  &   0  &   1  &   1
\end{bmatrix}
\end{array}\]
then 
\[
\begin{array}{rl}
X=\begin{bmatrix}
1  &   3   &  1  &   3  &   1  &   3   &  1  &   3\\
2  &   2   &  4  &   4  &   2  &   2   &  4  &   4\\
1  &   1   &  1  &   1  &   3  &   3   &  3  &   3  
\end{bmatrix}, &
Y=\begin{bmatrix}  
1  &   3   &  1  &   3  &   1  &   3   &  1  &   3\\
2  &   2   &  4  &   4  &   2  &   2   &  4  &   4\\
0  &   0   &  0  &   0  &   2  &   2   &  2  &   2
\end{bmatrix}\end{array}
\]
and
\[
Z=\footnotesize\arraycolsep=7pt\def\arraystretch{0.7}\begin{bmatrix}
1  &   2   &  1  &   2  &   4  &   2   &  0  &   0   &  0\\ 
9  &   6   &  3  &   6  &   4  &   2   &  0  &   0   &  0\\ 
1  &   4   &  1  &   4  &  16  &   4   &  0  &   0   &  0\\ 
9  &  12   &  3  &  12  &  16  &   4   &  0  &   0   &  0\\ 
1  &   2   &  3  &   2  &   4  &   6   &  2  &   4   &  6\\ 
9  &   6   &  9  &   6  &   4  &   6   &  6  &   4   &  6\\ 
1  &   4   &  3  &   4  &  16  &  12   &  2  &   8   &  6\\ 
9  &  12   &  9  &  12  &  16  &  12   &  6  &   8   &  6
\end{bmatrix}
\]
The kernel of $Z$ is two-dimensional and the 8-point algorithm is defeated by the vertices
of  $P$. However, there is only
one matrix in $\ker(Z)$ that is of rank two
\[ F=\begin{bmatrix} 0 & 1 & 0 \\ -1 & 0 & 0 \\ 0 & 0 & 0
\end{bmatrix}
\] and a unique reconstruction of $F$ is possible.
\end{exmp}

\section {The 8-Point Algorithm and a Cube}
\label{8Failure}
A quadric in $\PP^3$ is defined by the algebraic equation $p^TQp=0$, where $Q\in
\RR^{4\times 4}$ and $p \in \PP^3$. Since this is a quadratic equation in the indeterminate
$p$ we can choose the matrix $Q$ to be symmetric. Clearly $Q$ and any multiple of it
$\lambda Q,\, \lambda \in \RR$ define the same quadric. By a slight abuse of notation we
will refer to both the quadric and the matrix defining the quadric as $Q$.

One can try to fit a quadric
through a point configuration $P \in (\PP^3)^n$, then every point in the configuration
gives a linear equation on the ten entries of the  symmetric matrix $Q$. This results in
a linear equation system with indeterminate vector
\[\vv Q=[Q_{00},\ldots,Q_{04},Q_{11},\ldots,Q_{14},Q_{22},Q_{23},Q_{24},Q_{33},Q_{34},Q_{44}].\] Its
coefficient matrix can be constructed with the Veronese map.
The \emph{Veronese map} $\nu_{2,4}$ in degree two and four indeterminates is the map from
the four indeterminates to all monomials of degree two in these indeterminates.
\[
\begin{array}{rccc}
    \nu_{2,4}:&\PP^3&\rightarrow& \PP^9\\
&[x_1,x_2,x_3,x_4]&\mapsto&[x_1^2,
    x_1x_2,x_1x_3,x_1x_4,x_2^2,x_2x_3,x_2x_4,x_3^2,x_3x_4,x_4^2].\end{array}
\]

For convenience of notation we apply the map $\nu_{2,4}$ to each point in a configuration
separately. The map $\nu_{2,4}$ applied to $P\in(\PP^3)^n$ gives a matrix
$\nu_{2,4}(P)\in(\PP^9)^n$.  Therefore, if there exists a quadric $Q$ through the points in a
configuration $P\in(\PP^3)^n$, it can be computed via the linear equation system $\nu_{2,4}(P)\vv
Q=0$.

The rank of $Z$ is very essential to the 8-point algorithm and there is a relation to $\nu_{2,4}(P)$.
\begin{lem}\label{lem:ZandNu} 
Let $A_1$, $A_2$ be two cameras and $P\in(\PP^3)^n$ be a world point configuration. Let
$A_1P_i=\lambda_iX\in(\PP^2)^n$ and $A_2P_i=\lambda_iY\in(\PP^2)^n$, then the rank of $Z$ is bounded by the rank of
$\nu_{2,4}(P)$.
\end{lem}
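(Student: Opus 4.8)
The plan is to exhibit an explicit factorization of the matrix $Z$ through the Veronese matrix $\nu_{2,4}(P)$, so that $\rk Z \le \rk \nu_{2,4}(P)$ follows immediately from the elementary fact that $\rk(MN) \le \min\{\rk M, \rk N\}$.

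First I would examine a single row of $Z$, namely $X_i \otimes Y_i \in \RR^9$, and rewrite its entries in terms of $P_i$. Since $X_i = \tfrac{1}{\lambda_i} A_1 P_i$ and $Y_i = \tfrac{1}{\lambda_i} A_2 P_i$ (and the scaling $\tfrac{1}{\lambda_i^2}$ is irrelevant for the rank, as it only rescales rows), each coordinate of $X_i \otimes Y_i$ is of the form $(A_1 P_i)_a (A_2 P_i)_b$, which is a bilinear expression in the entries of $P_i$. Expanding $(A_1 P_i)_a (A_2 P_i)_b = \big(\sum_k (A_1)_{ak} (P_i)_k\big)\big(\sum_\ell (A_2)_{b\ell}(P_i)_\ell\big) = \sum_{k,\ell} (A_1)_{ak}(A_2)_{b\ell} (P_i)_k (P_i)_\ell$, I see that this is a linear combination of the degree-two monomials $(P_i)_k (P_i)_\ell$ in the coordinates of $P_i$ — that is, a linear combination of the entries of $\nu_{2,4}(P_i)$. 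Concretely, there is a fixed $9 \times 10$ matrix $M = M(A_1, A_2)$, depending only on the two cameras, with entries built from products $(A_1)_{ak}(A_2)_{b\ell}$, such that $X_i \otimes Y_i = \tfrac{1}{\lambda_i^2}\, M\, \nu_{2,4}(P_i)$ (after grouping the off-diagonal monomials $x_k x_\ell$ and $x_\ell x_k$, which appear with combined coefficient $(A_1)_{ak}(A_2)_{b\ell} + (A_1)_{a\ell}(A_2)_{bk}$).

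Stacking these identities over $i = 1, \dots, n$ gives $Z = D\, \nu_{2,4}(P)\, M^T$, where $D = \diag(\lambda_1^{-2}, \dots, \lambda_n^{-2})$ is invertible. Hence $\rk Z = \rk\big(\nu_{2,4}(P) M^T\big) \le \rk \nu_{2,4}(P)$, which is the claim. (One must be a little careful about the convention: $\nu_{2,4}(P)$ is written in the excerpt as an element of $(\PP^9)^n$, i.e.\ an $n \times 10$ matrix whose $i$-th row is $\nu_{2,4}(P_i)$; with that reading the display $Z = D\,\nu_{2,4}(P)\,M^T$ has the right shapes, $n\times 9 = (n\times n)(n\times 10)(10\times 9)$.)

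I do not expect a serious obstacle here — the result is essentially a bookkeeping identity. The only mildly delicate point is getting the index conventions consistent between the $3\times 4$ cameras, the row-wise Kronecker product defining $Z$, and the particular ordering of monomials chosen for $\nu_{2,4}$ and for $\vv Q$; in particular one should state clearly how the two "mixed" monomials $x_k x_\ell$ and $x_\ell x_k$ are identified and carry the summed coefficient. Once the matrix $M(A_1,A_2)$ is written down (or simply asserted to exist, since only its existence is needed), the rank inequality is immediate and the proof is complete.
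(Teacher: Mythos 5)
Your proof is correct and is essentially the paper's argument in matrix form: the fixed $9\times 10$ matrix $M(A_1,A_2)$ you construct is exactly the substitution $Q=A_2^TFA_1$ that the paper uses, so your factorization $Z=D\,\nu_{2,4}(P)\,M^T$ and the paper's observation that every equation $Y_i^TFX_i$ is the corresponding equation $P_i^TQP_i$ pulled back along this substitution are the same identity, read dually. The rank inequality then follows in both cases, yours via $\rk(MN)\le\min\{\rk M,\rk N\}$ and the paper's via persistence of linear dependencies among the rows.
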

\begin{proof} 
The statement can be rewritten. The number of linear independent equations in the system
of linear equations $y_i^TFx_i=0$ is bounded by the number of linear independent equations
in the system of linear equations $P_i^TQP_i=0$, where $Q$ is a $4\times 4$ generic
symmetric matrix of indeterminates.  Without loss of generality we assume that
$\nu_{2,4}(P)$ is of rank $k$ and the first $k$ equations $P_1^TQP_1,\ldots, P_k^TQP_k$ are
independent. Setting $Q=A_2^TFA_1$ we can write every point in the span of $Z$ as
$y^TFx=\sum_n\lambda_iy_i^TFx_i=\sum_n\lambda_iP_i^TA_2^TFA_1P_i=\sum_n\lambda_iP_i^TQP_i$
and by the independence of the first $k$ equations $P_1^TQP_1,\ldots, P_k^TQP_k$ we get
\[y^TFx=\sum_k
\lambda_iP_i^TQP_i=\sum_k\lambda_i y_i^TFx_i.\]
\end{proof}

\begin{figure}[b]
\center
\includegraphics[width=0.7\textwidth]{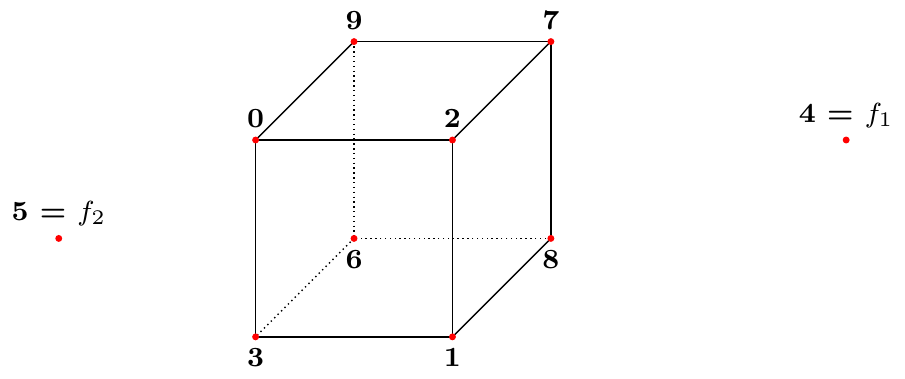}
\caption {The chosen labeling of word points and camera centers.}
\label{fig:cubeLabel}
\end{figure}

Let $P\in(\PP^3)^{10}$ represent a configuration of ten points.  The question whether the ten points
of $P$ are inscribable to a quaternary quadric has been studied in classical algebraic
geometry. However, no geometric interpretation is known up till now and it probably would
be too complicated to be of any use.  In algebraic terms this condition can easily be phrased
as $\det(\nu_{2,4}(P))=0$. Turnbull and Young give a description of the $PGL(3)$ invariant
$\det(\nu_{2,4}(P))=0$ in the bracket algebra (Turnbull-Young invariant)
\cite{turnbull1926linear}. Later the Turnbull-Young invariant has been straightened to a bracket
polynomial of degree 5 with 138 monomials \cite{white1988implementation}. Using the vertex labeling of Figure
\ref{fig:cubeLabel} the Turnbull-Young invariant given in \cite[p. 8-9]{white1988implementation}
reduces to the bracket polynomial
\begin{equation}
  \label{eq:fourBrackets}
  \begin{array}{l}
\left[\mathbf{0135}\right]\left[\mathbf{0247}\right]\left[\mathbf{1268}\right]\left[\mathbf{3469}\right]\left[\mathbf{5789}\right]-
\left[\mathbf{0134}\right]\left[\mathbf{0257}\right]\left[\mathbf{1268}\right]\left[\mathbf{3569}\right]\left[\mathbf{4789}\right]+\\
\left[\mathbf{0125}\right]\left[\mathbf{0346}\right]\left[\mathbf{1378}\right]\left[\mathbf{2479}\right]\left[\mathbf{5689}\right]-
\left[\mathbf{0124}\right]\left[\mathbf{0356}\right]\left[\mathbf{1378}\right]\left[\mathbf{2579}\right]\left[\mathbf{4689}\right],
 \end{array}
\end{equation}
where $\mathbf{0},\mathbf{1},\cdots, \mathbf{9}$ denote the points in the configuration
$[P,f_1,f_2]\in (\PP^3)^{10}$.

 We use this invariant to
show that $\nu_{2,4}(P)$ is not of full rank if the points in $P$ are the vertices of a combinatorial cube.

\begin{prop}\label{prop:cubeDegenerate}
Let $P$ be the vertices of a combinatorial cube in $\RR^3$, then the rank of $\nu_{2,4}(P)$ is at most seven.
\end{prop}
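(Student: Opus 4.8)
The plan is to exhibit a three-dimensional space of quadrics in $\PP^3$ that pass through all eight vertices of the combinatorial cube $P$. Since $\nu_{2,4}(P)$ is exactly the $8\times 10$ coefficient matrix of the linear system $\nu_{2,4}(P)\vv Q=0$ whose solutions are the quadrics through $P$, producing a three-dimensional solution space immediately gives $\operatorname{rank}\nu_{2,4}(P)\le 10-3=7$.

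First I would read the three quadrics off the combinatorial structure. The six quadrilateral facets of a cube come in three pairs of vertex-disjoint (``opposite'') facets; write $H_i,H_i'$ for the two facets of the $i$-th pair and let $\ell_i,\ell_i'$ be linear forms on $\PP^3$ cutting out the planes spanned by $H_i,H_i'$. Put $Q_i:=\ell_i\ell_i'$, a rank-two quadric consisting of a pair of planes. Every vertex of $P$ lies on exactly three facets, and since opposite facets share no vertex these three are one facet from each pair; in particular every vertex lies on $H_i$ or on $H_i'$, so one factor of $Q_i$ vanishes there. Hence $Q_1,Q_2,Q_3$ all vanish at every vertex of $P$, i.e.\ their coefficient vectors lie in $\ker\nu_{2,4}(P)$.

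The crux is to show that $Q_1,Q_2,Q_3$ are linearly independent as quadratic forms. I would argue by restriction to the plane $H_1\cong\PP^2$. A relation $\alpha Q_1+\beta Q_2+\gamma Q_3=0$ restricts on $H_1$ to $\beta\,Q_2|_{H_1}+\gamma\,Q_3|_{H_1}=0$, since $\ell_1|_{H_1}=0$. Now $Q_2|_{H_1}$ is the product of the two lines $H_1\cap H_2$ and $H_1\cap H_2'$, which support a pair of opposite edges of the quadrilateral facet $H_1$, while $Q_3|_{H_1}$ is the product of the lines $H_1\cap H_3,H_1\cap H_3'$ supporting the complementary pair of opposite edges. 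Because $P$ is an honest convex polytope, $H_1$ is a nondegenerate quadrilateral, so no three of its four vertices are collinear; hence these four edge-lines are pairwise distinct, the two conics $Q_2|_{H_1}$ and $Q_3|_{H_1}$ have disjoint sets of linear factors, and by unique factorization they are not proportional. This forces $\beta=\gamma=0$, and then $\alpha Q_1=0$ forces $\alpha=0$. Therefore $\dim\ker\nu_{2,4}(P)\ge 3$, so $\operatorname{rank}\nu_{2,4}(P)\le 7$.

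I expect the only delicate point to be the purely combinatorial bookkeeping in the last paragraph: one must check from the face lattice of the cube that the four edges of a facet $H_1$ lie on the four facets $H_2,H_2',H_3,H_3'$, with the two facets of each remaining pair contributing a pair of opposite edges of $H_1$, so that the restricted conics $Q_2|_{H_1}$ and $Q_3|_{H_1}$ really do split into distinct lines. Everything else is formal, and convexity enters only through the nondegeneracy of the quadrilateral facets. An alternative route, closer to the machinery set up above, is to use Lemma~\ref{lem:ZandNu} together with the bracket polynomial~\eqref{eq:fourBrackets}: $\operatorname{rank}\nu_{2,4}(P)\le 7$ is equivalent to the Turnbull--Young invariant of $[P,f_1,f_2]$ vanishing for all positions of the two auxiliary points $f_1,f_2$, and one can try to verify this from~\eqref{eq:fourBrackets} using the coplanarities forced by the face lattice of the cube.
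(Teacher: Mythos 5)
Your argument is correct, and it is genuinely different from the proof in the paper. You exhibit the kernel of $\nu_{2,4}(P)$ explicitly: the three rank-two quadrics $Q_i=\ell_i\ell_i'$ formed from the three pairs of opposite facet planes all vanish on the eight vertices (each vertex lies on one facet from each pair), and your restriction-to-$H_1$ argument for linear independence is sound — convexity guarantees each facet is a nondegenerate planar quadrilateral, so the four edge lines of $H_1$ cut out by $H_2,H_2',H_3,H_3'$ are pairwise distinct, unique factorization in the coordinate ring of the plane then rules out any relation, and a $3$-dimensional kernel of the $8\times 10$ matrix forces rank at most $7$. The paper instead adds two arbitrary auxiliary points $f_1,f_2$, observes that the rank bound is equivalent to the ten points $[P,f_1,f_2]$ lying on a common quadric, i.e.\ to the vanishing of the Turnbull--Young invariant in its reduced bracket form \eqref{eq:fourBrackets}, and then verifies this vanishing by a \macaulay computation modulo the ideal generated by the six facet-coplanarity determinants (after normalizing a projective frame); this is essentially the ``alternative route'' you sketch at the end. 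What your proof buys is a short, computer-free and coordinate-free argument that moreover produces the full net of quadrics through the vertices — the same family that the paper later uses in Section~3 (for the unit cube it is exactly $\diag(\alpha,\beta,\gamma,\delta)$ with $\alpha+\beta+\gamma+\delta=0$) — and it makes transparent that the eight vertices are the complete intersection of three quadrics. What the paper's route buys is the connection to the classical invariant-theoretic condition for ten points on a quaternary quadric (Turnbull--Young, straightened bracket polynomial), which is the framing the rest of its computational section documents.
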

\begin{proof}
Consider the equation system $\nu_{2,4}([P,f_1,f_2])$. These are the equations
$P_i^TQP_i$ concatenated with the two equations $f_i^TQf_i$ of two arbitrary points $f_1,\,f_2\in
\PP^3$.  If the rank of $\nu_{2,4}(P)$ is at most seven, then the rank of equation system
$\nu_{2,4}([P,f_1,f_2]^T)$ is at most nine. Thus the ten points of the configuration $[P,f_1,f_2]\in
(\PP^3)^{10}$ are in special position and are inscribable to a quarternary quadric. This is
equivalent to satisfying the Turnbull-Young invariant. 
  We checked with \macaulay \cite{M2} that the polynomial of Equation \ref{eq:fourBrackets} vanishes
for all combinatorial cubes with the eight vertices
$P=[\mathbf{0},\mathbf{1},\mathbf{2},\mathbf{3},\mathbf{6},\mathbf{7},\mathbf{8}, \mathbf{9}]$
independent of the choice of the two points $f_1=[\mathbf 4]$ and $f_2=[\mathbf 5]$.\end{proof}
Without using some speed-ups and simplifications solving Equation \ref{eq:fourBrackets} is
computationally out of reach. Thus we performed the computation in \macaulay \cite{M2} as follows.
Since we are only interested in points in $\RR^3$ we fixed the fourth coordinate of every point to
one. Further we have the freedom of choice of a coordinate system in $\RR^3$. We choose the point
$\mathbf 0$ as the origin and the points $\mathbf {3}$, $\mathbf 2$, $\mathbf 9$ as the three unit
vectors. This implies that $\mathbf{1}$ is on the $xy$-plane, $\mathbf {5}$ is on the $xz$-plane and
$\mathbf {7}$ is on the $yz$-plane.
In  the Listing \ref{algoCritical} one can find the \macaulay \cite{M2} code  used to check
Proposition \ref{prop:cubeDegenerate}.
\begin{rem}
  Various statements about eight points and quadric surface are known in classical
  algebraic geometry, like the three-dimensional version of Miquel's Theorem
  \cite[p. 18]{bobenko2008discrete} and the statements about eight associated points \cite{turnbull1925vector}.
\end{rem}

Due to Proposition \ref{prop:cubeDegenerate} we understand the behavior of the 8-point
algorithm if its
input configurations $X,Y$ are images of the vertices of a combinatorial cube.

\begin{thm}\label{thm:8pointFails}
  Let $A_1,A_2$  be two arbitrary cameras and let $P\in (\PP^3)^8$ be the vertices of a combinatorial cube.  Then the $8$-point algorithm with input $A_1P,A_2P$ fails to compute a fundamental matrix $F$. It is defeated by the vertices of $P$.
\end{thm}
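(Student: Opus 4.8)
The plan is to obtain the theorem as an immediate consequence of the two results established above, by a rank count. Recall that the 8-point algorithm can only succeed when condition \ref{dimKer} holds, i.e. when $\dim(\ker(Z))=1$ for the matrix $Z\in\RR^{8\times 9}$ assembled from the two image configurations. So it suffices to show that whenever the world points are the eight vertices of a combinatorial cube, the matrix $Z$ has a kernel of dimension at least two, regardless of which cameras are used.

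First I would fix arbitrary cameras $A_1,A_2$ and let $X,Y\in(\PP^2)^8$ be the resulting image configurations $A_1P$, $A_2P$, then form the associated matrix $Z$ as in the description of the algorithm. By Lemma \ref{lem:ZandNu}, $\rk(Z)\le \rk(\nu_{2,4}(P))$. Since $P$ consists of the vertices of a combinatorial cube in $\RR^3$, Proposition \ref{prop:cubeDegenerate} gives $\rk(\nu_{2,4}(P))\le 7$. Chaining the two inequalities yields $\rk(Z)\le 7$, hence by rank--nullity
\[
\dim(\ker(Z))=9-\rk(Z)\ge 2.
\]
This contradicts condition \ref{dimKer}, so the 8-point algorithm cannot return the fundamental matrix as the unique (up to scale) generator of $\ker(Z)$.

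Finally, because both inequalities above hold for every choice of $A_1$ and $A_2$ — neither Lemma \ref{lem:ZandNu} nor Proposition \ref{prop:cubeDegenerate} constrains the cameras — the failure is independent of the camera positions, which is exactly the statement that $P$ defeats the 8-point algorithm. The only genuinely substantive ingredient, the rank bound of Proposition \ref{prop:cubeDegenerate} (which rests on the vanishing of the Turnbull--Young invariant of Equation \ref{eq:fourBrackets} for all combinatorial cubes, checked symbolically in \macaulay), is already in hand, so I do not expect any further obstacle here. The only care needed is the routine bookkeeping that the projections $A_jP$ are well defined, i.e. that no vertex of the cube lies at a camera center — something one may assume of a genuine two-view configuration.
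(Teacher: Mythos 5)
Your proof is correct and follows exactly the paper's argument: combine Proposition \ref{prop:cubeDegenerate} with Lemma \ref{lem:ZandNu} to get $\rk(Z)\le 7$, so condition \ref{dimKer} fails for every choice of cameras. The rank--nullity step and the remark that neither ingredient constrains the cameras are just slightly more explicit versions of what the paper states.
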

\begin{proof}
By Proposition \ref{prop:cubeDegenerate} the matrix $\nu_{2,4}(P)$ is of  rank seven at most,
thus by Lemma \ref{lem:ZandNu} the matrix $Z$ is of rank seven at most. Hence the assumption
in the $8$-point algorithm that $Z$ is of rank eight at least is not satisfied.  
\end{proof}

Theorem \ref{thm:8pointFails} states that if we take two pictures from the vertices of a
combinatorial cube, then the matrix $Z$ has at most rank seven and thus the 8-point algorithm is not
able to compute the fundamental matrix.
\begin{rem}
  Unlike to the conditions on critical configurations Theorem \ref{thm:8pointFails} does not impose
  any constraints on the camera centers.
\end{rem}

\section{Reconstruction of $F$ From Cubes}
Even if $\dim(\ker(Z))=2$ it is sometimes possible to reconstruct the fundamental matrix $F$.
Since the matrix $Z$ is of rank at most seven one can try to reconstruct the fundamental
matrix by additionally enforcing the singularity condition of the fundamental
matrix. Solving for $F$ means finding the real roots of a univariate cubic polynomial. However
for certain regions in $\RR^3\times \RR^3$ of the two focal points, this polynomial has
more than one solution and a unique reconstruction of $F$ is not possible. These regions are
semi-algebraic sets. We study these by first studying the simplest case, when $P$ are the
vertices of the \emph{unit cube} $C_u=1/2\cdot\conv(\pm e_1\pm e_2 \pm e_3)$.

For the three-dimensional unit cube $C_u$ the matrix $Q_u\in \text{Sym}_4(\RR)$ defining the
family of quadrics through its vertices diagonalizes to $Q_u=\diag(\alpha,\beta,\gamma,\delta)$,
$\alpha,\beta,\gamma,\delta\in \RR$, such that $\alpha+\beta+\gamma+\delta=0$. This results in a two
parameter family of quadrics running through the eight vertices of $C_u$. If we include the two
camera centers then there is exactly one quadric $Q$ running through all ten points and it is given
as the solution of the linear equation system
\begin{equation}\label{linearEquaQuadric}
\begin{array}{rcl}
\begin{array}{c}
\text{tr}(Q)=0\\
c_1^TQc_1=0\\
c_2^TQc_2=0
\end{array}&\Leftrightarrow&
\underbrace{\begin{bmatrix}
1&1&1&1\\
x_1^2&x_2^2&x_3^2&x_4^2\\
y_1^2&y_2^2&y_3^2&y_4^2\\
\end{bmatrix}}_{:=M}
\begin{bmatrix}
\alpha\\
\beta\\
\gamma\\
\delta
\end{bmatrix}=0
\end{array}
\end{equation}
where $f_1=[x_1, x_2, x_3, x_4]$ and $f_2=[y_1, y_2, y_3, y_4]$. 
\begin{figure}[t] \centering
\includegraphics[width=0.375\textwidth]{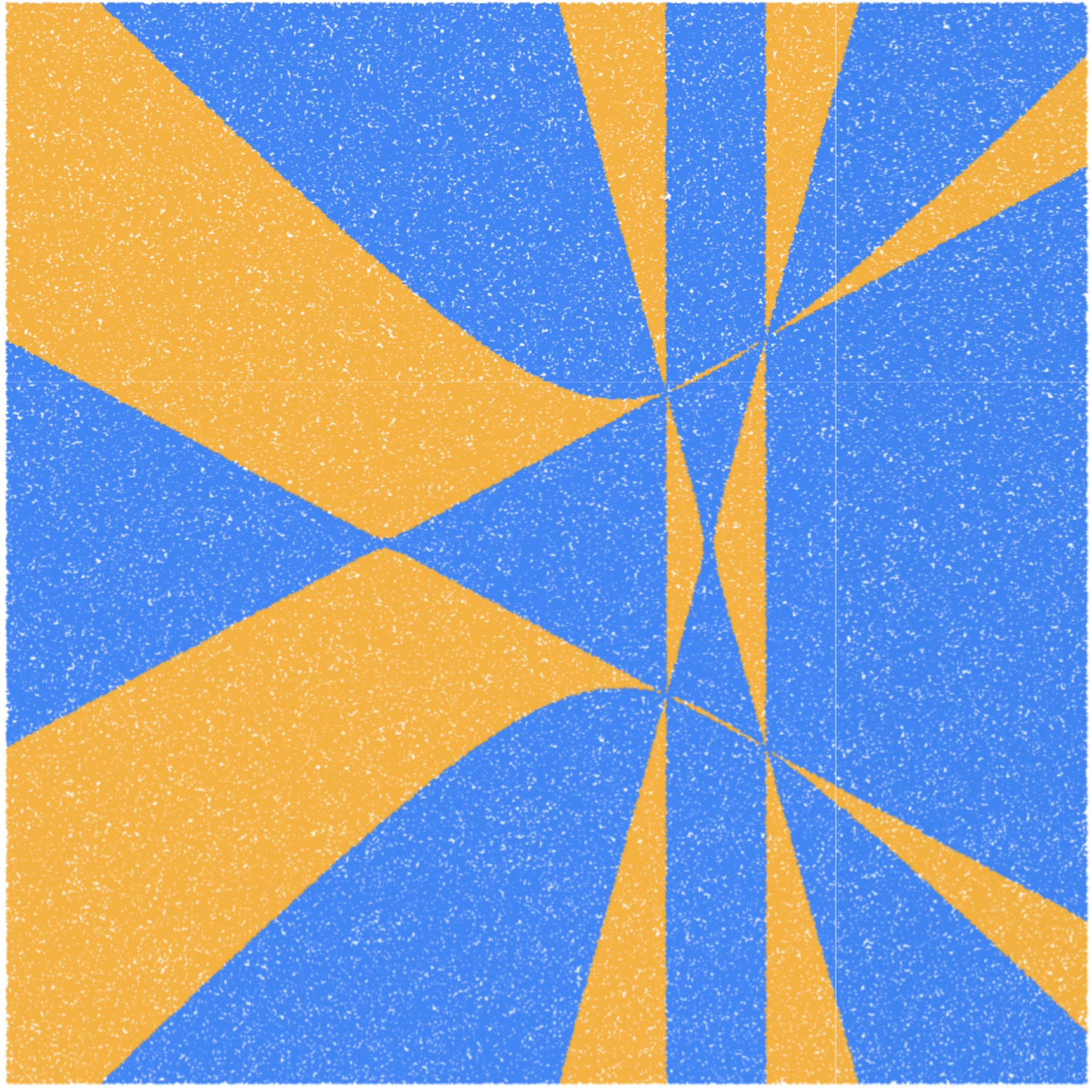}
\caption{Region of failure (orange) of the 8-point algorithm (enforcing the singularity constraint)
with a unit cube as input. We fixed the focal point of the first camera and randomly sampled the focal
point of the second camera in a chosen plane.}
\label{fig:semAlg}
\end{figure} By Cramer's rule
we construct the solution of this linear equation system, as the vector of the four signed maximal
minors of the matrix M,
\begin{equation}\label{eq:symetricSignature}
\begin{bmatrix}
\begin{vmatrix}
1&1&1\\
x_2^2&x_3^2&x_4^2\\
y_2^2&y_3^2&y_4^2\\
\end{vmatrix},&
-\begin{vmatrix}
1&1&1\\
x_1^2&x_3^2&x_4^2\\
y_1^2&y_3^2&y_4^2\\
\end{vmatrix},&
\begin{vmatrix}
1&1&1\\
x_1^2&x_2^2&x_4^2\\
y_1^2&y_2^2&y_4^2\\
\end{vmatrix},&
-\begin{vmatrix}
1&1&1\\
x_1^2&x_2^2&x_3^2\\
y_1^2&y_2^2&y_3^2\\
\end{vmatrix}\\
\end{bmatrix}.
\end{equation} 
We denote by $f^{*2}\in\PP^3$ the coordinate wise square of the vector $f\in\PP^3$.  Then Equation
\ref{eq:symetricSignature} can be written as the cross product of three vectors $(1,1,1,1)^T \times
f_1^{*2} \times f_2^{*2}$.

 Let $Q\in \text{Sym}_4(\RR)$ be the quadric running through the vertices of a combinatorial cube
$C$ and the two camera centers $f_1,f_2$ . Further let $Q_u \in \text{Sym}_4(\RR)$ be the quadric
running through the vertices of the unit cube $C_u$ and the cameras centers $g_1,g_2$.
\begin{prop}\label{prop:projCube} If there is a projective transformation $T\in PGL(3)$ from $C_u$ to
$C$, then $(1,1,1,1)^T \times f_1^{*2} \times f_2^{*2}$ has the same sign pattern as
$(1,1,1,1)\times Tg_1^{*2} \times Tg_2^{*2}$ and  $Q,Q_u$ have the same sign pattern.
\end{prop}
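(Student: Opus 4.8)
The plan is to realise $Q$ as a congruence transform of $Q_u$; then the statement about quadrics follows from Sylvester's law of inertia, and the statement about cross products follows from multilinearity. Throughout we read the hypothesis in the only way that makes it meaningful: the projective transformation $T\in PGL(3)$ not only satisfies $TC_u=C$ but also carries the camera centers along, $f_i=Tg_i$ in $\PP^3$ (write $f_i=\mu_i Tg_i$ with $\mu_i\in\RR\setminus\{0\}$ for a choice of homogeneous representatives). If instead $g_1,g_2$ were cameras for $C_u$ unrelated to $f_1,f_2$ the assertion would be false, so this reading is forced.

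\emph{Pulling quadrics back along $T$.} If $q(p)=p^TQp$ is a quadric then $q(Tp)=(Tp)^TQ(Tp)=p^T(T^TQT)p$. Hence, as $p$ runs through the eight vertices of $C_u$ and $Tp$ through the eight vertices of $C$, the matrix $Q$ vanishes on the vertices of $C$ if and only if $T^TQT$ vanishes on the vertices of $C_u$; by the diagonalisation recalled just before Equation \ref{linearEquaQuadric}, this happens exactly when $T^TQT=\diag(\alpha,\beta,\gamma,\delta)$ with $\alpha+\beta+\gamma+\delta=0$. Likewise $f_i^TQf_i=\mu_i^2\,g_i^T(T^TQT)g_i$, so $Q$ vanishes at $f_1,f_2$ if and only if $T^TQT$ vanishes at $g_1,g_2$.

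\emph{Identifying the pulled-back quadric.} Putting the two conditions together, $T^TQT$ is the trace-zero diagonal matrix passing through the vertices of $C_u$ and through $g_1,g_2$; but that is precisely $Q_u$, the solution of the linear system in Equation \ref{linearEquaQuadric} for the cameras $g_1,g_2$, given by the cross product of Equation \ref{eq:symetricSignature}. (Pull-back along $T$ is a linear isomorphism between the two families of vertex-quadrics, so the non-degeneracy under which $Q_u$ is the unique quadric through its ten points is exactly the non-degeneracy under which $Q$ is well defined; assuming it, the identification is valid.) Therefore $T^TQT=Q_u$, equivalently $Q=(T^{-1})^TQ_u\,T^{-1}$, which exhibits $Q$ and $Q_u$ as congruent real symmetric matrices.

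\emph{Conclusions, and where the work is.} For the cross-product assertion: the map $(a,b,c)\mapsto a\times b\times c\in\RR^4$ — the vector of signed maximal minors of the $3\times4$ matrix with rows $a,b,c$ — is multilinear, and $f_i^{*2}=\mu_i^2(Tg_i)^{*2}$; since $\mu_1^2\mu_2^2>0$, the vectors $(1,1,1,1)^T\times f_1^{*2}\times f_2^{*2}$ and $(1,1,1,1)^T\times (Tg_1)^{*2}\times (Tg_2)^{*2}$ are positive scalar multiples of each other and so have the same sign pattern. For the quadric assertion: congruent real symmetric matrices have equal inertia by Sylvester's law, so $Q$ and $Q_u$ have the same number of positive, zero and negative eigenvalues; fixing the harmless overall-sign ambiguity of a quadric matrix by a normalisation (for instance, the leading nonzero entry of the defining cross-product vector positive), $Q$ and $Q_u$ share the same sign pattern, which for the diagonal $Q_u=\diag(\alpha,\beta,\gamma,\delta)$ is simply $(\sign\alpha,\sign\beta,\sign\gamma,\sign\delta)$. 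The only step that is more than a routine change of coordinates is the identification above: one must verify that pulling back along $T$ matches the two-parameter families of vertex-quadrics and sends $Q$ to $Q_u$, in particular that the genericity hypotheses making $Q$ and $Q_u$ well defined correspond.
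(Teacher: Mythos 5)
Your proof is correct and follows essentially the same route as the paper: realise $Q$ and $Q_u$ as congruent via $T$ (using that each is the unique quadric through its ten-point configuration) and then apply Sylvester's law of inertia. You are in fact somewhat more careful than the paper's own proof, which simply writes $T^TQ_uT=Q$ without tracking projective scale factors and does not separately address the cross-product sign-pattern claim, both of which you handle explicitly.
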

\begin{proof} Let $P\in (\PP^3)^{10}$ be the vertices of $C$ together with $f_1,f_2$ and $P_u$ be
the vertices of $C_u$ together with $Tg_1,Tg_2$.
\[(P_u^TQ_uP_u)_{ii}=0\,\forall\, i\Leftrightarrow(P^TT^TQ_uTP)_{ii}=0\,\forall\, i\Rightarrow
T^TQ_uT=Q.\] 
Now by Sylvester's law of inertia $Q$ and $Q_u$ have the same sign pattern.
\end{proof}

From Proposition \ref{prop:projCube} we are able to compute the type of the quadric $Q$ by
finding a projective transformation that maps the vertices of $C$ to the vertices of
$C_u$. In particular $Q$ is ruled if $T^TQ_uT=Q$ is ruled.

\begin{rem}
If we interprete the vector of diagonal entries of $Q$ as a point in $\PP^3$ the condition
$(1,1,1,1)^T \times f_1^{*2} \times f_2^{*2}$ is equivalent to $[\alpha,\beta,\gamma,\delta]$ being
on the intersection of the three planes with normal vectors $[1,1,1,1], \, f_1^{*2}, \, f_2^{*2}\in\PP^3$. 
\end{rem}

The quadric is ruled if $[\alpha,\beta,\gamma,\delta]$ has a sign pattern of the following types
$[-,-,+,+]$, $[+,-,+,-]$, or $[-,+,+,-]$.  The boundaries of the components of the semi-algebraic
set (where the signature of the quadric changes) are given as the vanishing set of the determinants
$\alpha$ , $\beta$, $\gamma$ and $\delta$ of Equation \ref{eq:symetricSignature}.

In some cases the signature changes, but
still the quadric stays non-ruled. Therefore, to get a more explicit answer for this example it is useful
to break up the symmetry of Equation \ref{eq:symetricSignature}. Since the quadric is independent on
a scaling factor of $Q$ we can set the last diagonal entry without loss of
generality  to $\delta=1$. Thus the equation system of Equation
\ref{linearEquaQuadric} degenerates to an equation system of three equations in three variables and
we can solve it explicitly.
Then $Q=\diag(\alpha,\beta,-\alpha-\beta-1,1)$.
There are two distinct cases when the quadric is ruled: 
\begin{enumerate}
\item If $\alpha,\beta\leq 0$ and $\alpha+\beta\leq -1$. 
\item If $\alpha, \, \beta$ have different signs and $\alpha+\beta\geq 1$.
\end{enumerate}

The vector of diagonal
entries of $Q$ then is given up to scale by
\[\footnotesize
\begin{bmatrix}
\begin{vmatrix} x_2^2-x_3^2 & x_3^2-1\\ y_2^2-y_3^2 & y_3^2-1
\end{vmatrix}\\[12pt]
\begin{vmatrix} x_1^2-x_2^2 & x_3^2-1\\ y_1^2-y_2^2 & y_3^2-1
\end{vmatrix}\\[12pt] -\begin{vmatrix} x_2^2-x_3^2 & x_3^2-1\\ y_2^2-y_3^2 & y_3^2-1
\end{vmatrix}\!-\!\begin{vmatrix} x_1^2-x_2^2 & x_3^2-1\\ y_1^2-y_2^2 & y_3^2-1
\end{vmatrix}-\begin{vmatrix} x_1^2-x_2^2 & x_2^2-x_3^2\\ y_1^2-y_2^2 & y_2^2-y_3^2
\end{vmatrix}\\[12pt]
\,\begin{vmatrix} x_1^2-x_2^2 & x_2^2-x_3^2\\ y_1^2-y_2^2 & y_2^2-y_3^2
\end{vmatrix}
\end{bmatrix}\in \PP^3
\]

\section{How to Handle Pictures of Cubes}

In the sections above we did not consider noisy pictures. As seen in Theorem
\ref{thm:8pointFails} the rank of $Z$ drops by at least one if we take pictures of
combinatorial cubes. However, in the presence of noise the matrix $Z\in \RR^{8\times 9}$
again is of rank eight, but it is close to being singular. This results in a very bad
performance of the algorithm in practice and we strongly advise against using it. It is
simply the wrong choice of algorithm, since it is incapable of dealing with this set-up.

As discussed in Section \ref{8Failure} the matrix $Z$ has at most rank seven in the noisefree case, hence a standard implementation of the $7$-point algorithm run on a 7-element
subset of vertices can be used to retrieve the fundamental matrix. Thus a natural fix for
the flaw of the $8$-point algorithm is to use a modified version of the $7$-point
algorithm, that allows eight points as inputs. We use singular value decomposition on $Z$
to obtain a matrix $Z'\in \RR^{8\times 9}$ that is of rank seven and minimizes the Frobenius
norm of $|Z-Z'|$. Let $Z=UDV^T$ be the singular value decomposition of $Z$, then by the
Eckart-Young-Mirsky theorem $Z'=U\diag(\sigma_1,\ldots\sigma_7,0,0)V^T$. Now we use the
$7$-point algorithm to obtain (one to three) possible solutions for the fundamental
matrix. If there are multiple solutions, we chose the one that minimizes the residual
error on the input points.  The $8$-point algorithm for cubes then is given in Algorithm \ref{algo:cube-8-point}.
\begin{figure}
\begin{algorithm}[H]\label{algo:cube-8-point}
 \KwIn{Two image point configurations $X,Y\in(\PP^2)^n$;}
 \KwOut{The fundamental matrix $F$;}
\Begin{
\begin{enumerate}
\item Normalize $X,Y$
\item Compute $Z\in \RR^{8\times 9} $ from $X,Y$;
\item Compute $Z'$ that minimizes the Frobenius norm of $|Z-Z'|$.\\
$Z':=U\diag(\sigma_1,\ldots\sigma_7,0,0)V^T$.
\item Compute the two generators $f_1$ and $f_2$ of
$\ker(Z')$\\ and solve $\det(\alpha F_1+(1-\alpha)F_2)=0$
\end{enumerate}
\If{$\det(\alpha F_1+(1-\alpha)F_2)$\emph{ has multiple real roots} }{
Choose the solution that minimizes the residual
error on $X,Y$.}
}
 \caption{Cube-8-point algorithm}
\end{algorithm}
\end{figure}

\section{Numerical Experiments}

We performed random tests on synthetic data to compare the performance of different 
algorithms. To do so, we sampled random cubes within the box $[-1,1]^3$. The cameras were
chosen with focal points roughly on a sphere with radius six. Gaussian noise was applied
onto the images with standard deviation between $0\%-10\%$ of the image sizes and zero
mean. For each noise level we chose 2000 random samples and respectively computed 2000 approximations
of fundamental matrices.  As a measure to analyze the results of the
algorithm we used the metric on the Grassmanian between two linear subspaces, namely the
angle between the vectorization of the true fundamental matrix and the approximated
fundamental matrix.

\begin{figure}[b]
  \includegraphics[width=\textwidth]{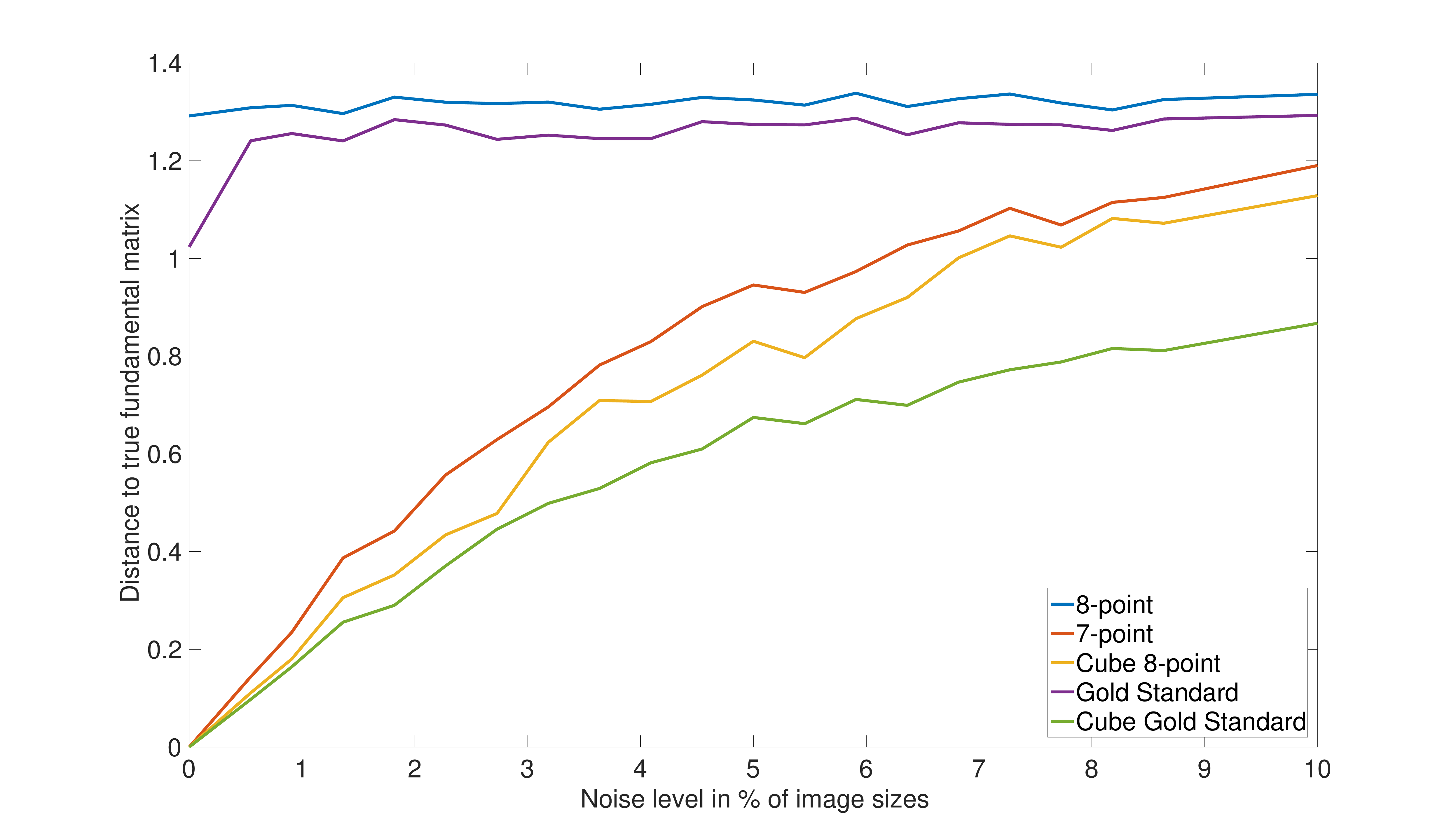}%
   \caption{Comparison between algorithms to reconstruct the fundamental matrix.  Plotted is the distance as angle (rad) between two one-dimensional subspaces versus the noise level.}
  \label{fig:comparisonFunda}
\end{figure} 
The modified version of the $8$-point algorithm for cubes (Algorithm
\ref{algo:cube-8-point}) gives good results. Its running time is almost the same as of the usual
$7$-point algorithm, but unlike the unmodified version it is noise correcting. It also
gives better results than algorithms that rely on non-linear optimization to estimate the
fundamental matrix, like the \emph{Gold Standard algorithm} \cite[Algorithm
11.3]{hartley2003multiple}. These algorithms usually use an initial guess of the
fundamental matrix computed via the $8$-point algorithm. But the results of the $8$-point
algorithm are so far off from the true fundamental matrix, such that the non-linear
solvers get stuck in a local optimum far off from the global one.  However by using estimates of the fundamental matrix
computed with the modified $8$-point algorithm for cubes (Algorithm
\ref{algo:cube-8-point}) the Gold Standard algorithm can be improved. For example using an
estimate of the fundamental matrix computed with Algorithm \ref{algo:cube-8-point} as
initial input, instead of a fundamental matrix computed with the 8-point algorithm
\cite[Algorithm 11.1]{hartley2003multiple}, improves the Gold Standard algorithm
\cite[Algorithm 11.3]{hartley2003multiple}. In Figure \ref{fig:comparisonFunda} this
version of the Gold Standard algorithm is denoted by \emph{Cube Gold Standard}.

Note that there are also global solvers to find fundamental matrices based on semidefinite
programming \cite{bugarin2015rank}. Noteworthy the situation in our case is different from
the one depicted \cite{bugarin2015rank}. Usually there are more than 8 correspondent
image point pairs available. In \cite{bugarin2015rank}  ten points and more are considered. Thus due to noisy data the matrix $Z$ is of rank nine. For the pathological case of only eight points and a rank drop in $Z$, Algorithm 1 in
\cite{bugarin2015rank} has not been able to certify global optimality
 based on \texttt{GloptiPoly 3} \cite{henrion2009gloptipoly}. 

If one has the freedom of choice to place the eight points in $\PP^3$ we suggest using the skew
octagon for a more robust reconstruction of the fundamental matrix. The skew octagon is
computational the optimal solution to various sphere placement problems, e.g. the Thomson problem \cite{skewOctagon}.

\subsection{Acknowledgment}
We would like to thank Michael Joswig for his guidance and Fredrik Kahl for our correspondences about critical configurations.
\section{Computations}
\label{compu3}
Below you can find the code we used to check Proposition \ref{prop:cubeDegenerate}.
Lines 2-11 define the vertices of a cube.
Lines 14-17 define the facets of the cube. In lines 22-34 the reduced Turnbull-Young invariant of Equation \ref{eq:fourBrackets} is defined.
This invariant vanishes in the quotient ring $S=R/J$ of line 19.

\begin{lstlisting}  [label=algoCritical, caption=Vanishing of Turnbull-Young Invariant]
R=QQ[x_1..x_15]
Cube=matrix{ {0,0,0,1},           --0    
             {x_1,x_2,0,1},    	  --1    
             {0,1,0,1},	      	  --2    
             {1,0,0,1},	      	  --3    
             {x_10,x_11,x_12,1},  --4 f_1
             {x_13,x_14,x_15,1},  --5 f_2
             {x_5,0,x_6,1},    	  --6    
             {0,x_3,x_4,1},    	  --7    
             {x_7,x_8,x_9,1},	  --8    
             {0,0,1,1}}	      	  --9    
-- the six facets of the cube are coplanar
-- their determinants vanish
J=ideal(
det submatrix(Cube,{0,1,2,3},),det submatrix(Cube,{6,7,8,9},),
det submatrix(Cube,{0,3,6,9},),det submatrix(Cube,{1,2,7,8},),
det submatrix(Cube,{0,2,7,9},),det submatrix(Cube,{1,3,6,8},))

S=R/J
F=map(S,R)
-- the remaining 4 bracket monomials of the Turnbull-Young invariant 
p=F(
det submatrix(Cube,{0,1,3,5},)*det submatrix(Cube,{0,2,4,7},)*
det submatrix(Cube,{1,2,6,8},)*det submatrix(Cube,{3,4,6,9},)*
det submatrix(Cube,{5,7,8,9},)-
det submatrix(Cube,{0,1,3,4},)*det submatrix(Cube,{0,2,5,7},)*
det submatrix(Cube,{1,2,6,8},)*det submatrix(Cube,{3,5,6,9},)*
det submatrix(Cube,{4,7,8,9},)+
det submatrix(Cube,{0,1,2,5},)*det submatrix(Cube,{0,3,4,6},)*
det submatrix(Cube,{1,3,7,8},)*det submatrix(Cube,{2,4,7,9},)*
det submatrix(Cube,{5,6,8,9},)-
det submatrix(Cube,{0,1,2,4},)*det submatrix(Cube,{0,3,5,6},)*
det submatrix(Cube,{1,3,7,8},)*det submatrix(Cube,{2,5,7,9},)*
det submatrix(Cube,{4,6,8,9},))
\end{lstlisting}

 \bibliographystyle{amsplain} 
\bibliography{example}

\providecommand{\bysame}{\leavevmode\hbox to3em{\hrulefill}\thinspace}
\providecommand{\MR}{\relax\ifhmode\unskip\space\fi MR }
\providecommand{\MRhref}[2]{%
  \href{http://www.ams.org/mathscinet-getitem?mr=#1}{#2}
}
\providecommand{\href}[2]{#2}
\begin{thebibliography}{10}

\bibitem{bobenko2008discrete}
Alexander~I. Bobenko, John~M. Sullivan, Peter Schr{\"o}der, and G\"unter
  Ziegler, \emph{Discrete differential geometry}, Springer, 2008.

\bibitem{skewOctagon}
Kevin Brown, \emph{Skew octagon solving the {T}homson problem of eight points},
  Available at http://mathpages.com/home/kmath005/elec/elec8.htm, 2017.

\bibitem{bugarin2015rank}
Florian Bugarin, Adrien Bartoli, Didier Henrion, Jean-Bernard Lasserre,
  Jean-Jos{\'e} Orteu, and Thierry Sentenac, \emph{Rank-constrained fundamental
  matrix estimation by polynomial global optimization versus the eight-point
  algorithm}, Journal of Mathematical Imaging and Vision \textbf{53} (2015),
  no.~1, 42--60.

\bibitem{M2}
Daniel~R. Grayson and Michael~E. Stillman, \emph{{M}acaulay2, a software system
  for research in algebraic geometry}, Available at
  \url{http://www.math.uiuc.edu/Macaulay2/}.

\bibitem{hartley2007critical}
Richard Hartley and Fredrik Kahl, \emph{Critical configurations for projective
  reconstruction from multiple views}, International Journal of Computer Vision
  \textbf{71} (2007), no.~1, 5--47.

\bibitem{hartley2003multiple}
Richard Hartley and Andrew Zisserman, \emph{Multiple view geometry in computer
  vision}, Cambridge University Press, 2003.

\bibitem{henrion2009gloptipoly}
Didier Henrion, Jean-Bernard Lasserre, and Johan L{\"o}fberg, \emph{Gloptipoly
  3: moments, optimization and semidefinite programming}, Optimization Methods
  \& Software \textbf{24} (2009), no.~4-5, 761--779.

\bibitem{joswig2013polyhedral}
Michael Joswig and Thorsten Theobald, \emph{Polyhedral and algebraic methods in
  computational geometry}, Springer Science \& Business Media, 2013.

\bibitem{krames1941ermittlung}
Josef Krames, \emph{Zur {E}rmittlung eines {{O}bjektes} aus zwei
  {Perspektiven}.({Ein} {Beitrag} zur {Theorie} der “gef{\"a}hrlichen
  {{\"O}rter}”.)}, Monatshefte f{\"u}r Mathematik und Physik \textbf{49}
  (1941), no.~1, 327--354.

\bibitem{maybank2012theory}
Stephen Maybank, \emph{Theory of reconstruction from image motion}, vol.~28,
  Springer Science \& Business Media, 2012.

\bibitem{philip1998critical}
Johan Philip, \emph{Critical point configurations of the 5-, 6-, 7-, and
  8-point algorithms for relative orientation}, Technical report TRITA-MAT-MA
  (1998).

\bibitem{turnbull1926linear}
Herbert Turnbull and Alfred Young, \emph{The linear invariants of ten
  quaternary quadrics}, Trans. Camb. Phil. Soc \textbf{23} (1926), 265--301.

\bibitem{turnbull1925vector}
HW~Turnbull, \emph{On the vector algebra of eight associated points of three
  quadric surfaces}, Mathematical Proceedings of the Cambridge Philosophical
  Society, vol.~22, Cambridge University Press, 1925, pp.~481--487.

\bibitem{white1988implementation}
Neil White, \emph{Implementation of the straightening algorithm of classical
  invariant theory}, Proceedings, Invariant Theory and Tableaux, IMA (1988).

\end{thebibliography}
\end{document}